\newtheorem{prop}{Property}
\title{TENER: Adapting Transformer Encoder for Named Entity Recognition}
\author{Hang Yan, Bocao Deng, Xiaonan Li, Xipeng Qiu$^*$ \\\\
	School of Computer Science, Fudan University\\ Shanghai Key Laboratory of Intelligent Information Processing, Fudan University \\
	{\small \texttt{\{hyan19,xpqiu\}@fudan.edu.cn, \{dengbocao,lixiaonan1208\}@gmail.com} }
}
\date{}
\begin{document}
\maketitle

\renewcommand{\thefootnote}{\fnsymbol{footnote}}
\footnotetext[1]{Corresponding author.}
\renewcommand{\thefootnote}{\arabic{footnote}}

\begin{abstract}
 Bidirectional long short-term memory networks (BiLSTMs) have been widely used as an encoder for named entity recognition (NER) task.  Recently, the fully-connected self-attention architecture (aka Transformer) is broadly adopted in various natural language processing (NLP) tasks owing to its parallelism and advantage in modeling the long-range context. Nevertheless, the performance of the vanilla Transformer in NER is not as good as it is in other NLP tasks. In this paper, we propose TENER, a NER architecture adopting adapted Transformer Encoder to model the character-level features and word-level features. By incorporating the direction-aware, distance-aware and un-scaled attention, we prove the Transformer-like encoder is just as effective for NER as other NLP tasks.
 Experiments on six NER datasets show that TENER achieves superior performance than the prevailing BiLSTM-based models.
\end{abstract}

\section{Introduction}

The named entity recognition (NER) is the task of finding the start and end of an entity in a sentence and assigning a class for this entity. NER has been widely studied in the field of natural language processing (NLP) because of its potential assistance in question generation~\cite{DBLP:conf/nlpcc/ZhouYWTBZ17}, relation extraction~\cite{miwa2016end}, and coreference resolution~\cite{fragkou2017applying}. Since~\cite{collobert2011natural}, various neural models have been introduced to avoid hand-crafted features~\cite{huang2015bidirectional,ma2016end,lample2016neural}.

NER is usually viewed as a sequence labeling task, the neural models usually contain three components: word embedding layer, context encoder layer, and decoder layer~\cite{huang2015bidirectional,ma2016end,lample2016neural,DBLP:journals/tacl/ChiuN16,chen2019grn,DBLP:conf/acl/ZhangLS18,gui2019lexicon}. The difference between various NER models mainly lies in the variance in these components.

Recurrent Neural Networks (RNNs) are widely employed in NLP tasks due to its sequential characteristic, which is aligned well with language. Specifically, bidirectional long short-term memory networks (BiLSTM)~\cite{hochreiter1997long} is one of the most widely used RNN structures. \citep{huang2015bidirectional} was the first one to apply the BiLSTM and Conditional Random Fields (CRF)~\cite{lafferty2001conditional} to sequence labeling tasks. Owing to BiLSTM's high power to learn the contextual representation of words, it has been adopted by the majority of NER models as the encoder~\cite{ma2016end,lample2016neural,DBLP:conf/acl/ZhangLS18,gui2019lexicon}.

Recently, Transformer~\cite{vaswani2017attention} began to prevail in various NLP tasks, like machine translation~\cite{vaswani2017attention}, language modeling~\cite{radford2018improving}, and pretraining models~\cite{DBLP:journals/corr/abs-1810-04805}.
The Transformer encoder adopts a fully-connected self-attention structure to model the long-range context, which is the weakness of RNNs.
Moreover, Transformer has better parallelism ability than RNNs.
However, in the NER task, Transformer encoder has been reported to perform poorly~\cite{DBLP:conf/naacl/GuoQLSXZ19}, our experiments also confirm this result.
Therefore, it is intriguing to explore the reason why Transformer does not work well in NER task.

In this paper, we analyze the properties of Transformer and propose two specific improvements for NER.

%

\begin{figure}[!htb]
    \centering
      \resizebox {0.9\linewidth} {!} {
      \begin{tikzpicture}[font=\small,every node/.style={inner sep=1,outer sep=1,minimum height=1em, text depth=0cm}]
      \node[text=red!70,draw=black](a1){Louis};
      \node[text=red!70,draw=black, xshift=2.7em](a2){Vuitton};
      \node[xshift=5.8em](b){founded};
      \node[text=blue!70,draw=black,xshift=8.6em](c1){Louis};
      \node[text=blue!70,draw=black,xshift=11.3em](c2){Vuitton};
      \node[text=blue!70,draw=black,xshift=13.65em](c3){Inc.};
      \node[xshift=15.0em](d){in};
      \node[text=green!70,draw=black,xshift=16.5em](e){1854};
      \node[xshift=17.7em](f){.};


       \node[below of=a1,node distance =1em,]{\texttt{PER}};
       \node[below of=a2,node distance =1em,]{\texttt{PER}};
       \node[below of=c1,node distance =1em,]{\texttt{ORG}};
       \node[below of=c2,node distance =1em,]{\texttt{ORG}};
       \node[below of=c3,node distance =1em,]{\texttt{ORG}};
       \node[below of=e,node distance =1em,]{\texttt{TIME}};
      \end{tikzpicture}
      }
    \caption{An example for NER. The relative direction is important in the NER task, because words before ``Inc." are mostly to be an organization, words after ``in" are more likely to be time or location. Besides, the distance between words is also important, since only continuous words can form an entity, the former ``Louis Vuitton" can not form an entity with the ``Inc.". }
    \label{fig:example}
\end{figure}

The first is that the sinusoidal position embedding used in the vanilla Transformer is aware of distance  but unaware of the directionality. In addition, this property will lose when used in the vanilla Transformer. However, both the direction and distance information are important in the NER task. For example in Fig \ref{fig:example}, words after ``in" are more likely to be a location or time than words before it, and words before ``Inc." are mostly likely to be of the entity type ``ORG". Besides, an entity is a continuous span of words. Therefore, the awareness of distance might help the word better recognizes its neighbor. To endow the Transformer
with the ability of direction- and distance-awareness, we adopt the relative positional encoding~\cite{DBLP:conf/naacl/ShawUV18,DBLP:conf/iclr/HuangVUSHSDHDE19,DBLP:conf/acl/DaiYYCLS19}. instead of the absolute position encoding. We propose a revised relative positional encoding that uses fewer parameters and performs better.

The second is an empirical finding. The attention distribution of the vanilla Transformer is scaled and smooth. But for NER, a sparse attention is suitable since not all words are necessary to be attended. Given a current word, a few contextual words are enough to judge its label. The smooth attention could include some noisy information. Therefore, we abandon the scale factor of dot-production attention and use an un-scaled and sharp attention.

With the above improvements, we can greatly boost the performance of Transformer encoder for NER.

Other than only using Transformer to model the word-level context, we also tried to apply it as a character encoder to model word representation with character-level information.
The previous work has proved that character encoder is necessary to capture the character-level features and alleviate the out-of-vocabulary (OOV) problem~\cite{lample2016neural,ma2016end,DBLP:journals/tacl/ChiuN16,DBLP:conf/emnlp/XinHMR18}. In NER, CNN is commonly used as the character encoder.
However, we argue that CNN is also not perfect for representing character-level information, because the receptive field of CNN is limited, and the kernel size of the CNN character encoder is usually 3, which means it cannot correctly recognize 2-gram or 4-gram patterns. Although we can deliberately design different kernels, CNN still cannot solve patterns with discontinuous characters, such as ``un..ily'' in  ``unhappily" and ``unnecessarily". Instead, the Transformer-based character encoder shall not only fully make use of the concurrence power of GPUs, but also have the potentiality to recognize different n-grams and even discontinuous patterns. Therefore, in this paper, we also try to use Transformer as the character encoder, and we compare four kinds of character encoders.


In summary, to improve the performance of the Transformer-based model in the NER task, we explicitly utilize the directional relative positional encoding, reduce the number of parameters and sharp the attention distribution. After the adaptation, the performance raises a lot, making our model even performs better than BiLSTM based models. Furthermore, in the six NER datasets, we achieve state-of-the-art performance among models without considering the pre-trained language models or designed features.

\section{Related Work}

\subsection{Neural Architecture for NER}

\citet{collobert2011natural} utilized the Multi-Layer Perceptron (MLP) and CNN to avoid using task-specific features to tackle different sequence labeling tasks, such as Chunking, Part-of-Speech (POS) and NER. In~\cite{huang2015bidirectional}, BiLSTM-CRF was introduced to solve sequence labeling questions. Since then, the BiLSTM has been extensively used in the field of NER~\cite{DBLP:journals/tacl/ChiuN16,DBLP:conf/nlpcc/DongZZHD16,DBLP:conf/nlpcc/YangZLZZS18,ma2016end}.


Despite BiLSTM's great success in the NER task, it has to compute token representations one by one, which massively hinders full exploitation of GPU's parallelism. Therefore, CNN has been proposed by~\cite{strubell2017fast,DBLP:conf/ijcai/GuiM0ZJH19} to encode words concurrently. In order to enlarge the receptive field of CNNs, \citep{strubell2017fast} used iterative dilated CNNs (ID-CNN).





Since the word shape information, such as the capitalization and n-gram, is important in recognizing named entities, CNN and BiLSTM have been used to extract character-level information~\cite{DBLP:journals/tacl/ChiuN16,lample2016neural,ma2016end,strubell2017fast,chen2019grn}. 


Almost all neural-based NER models used pre-trained word embeddings, like Word2vec and Glove~\cite{pennington2014glove,mikolov2013efficient}. And when contextual word embeddings are combined, the performance of NER models will boost a lot~\cite{DBLP:conf/acl/PetersABP17,peters2018deep,DBLP:conf/coling/AkbikBV18}. ELMo introduced by~\cite{peters2018deep} used the CNN character encoder and BiLSTM language models to get contextualized word representations. Except for the BiLSTM based pre-trained models, BERT was based on Transformer~\cite{DBLP:journals/corr/abs-1810-04805}.

\subsection{Transformer}

Transformer was introduced by \citep{vaswani2017attention}, which was mainly based on self-attention. It achieved great success in various NLP tasks. Since the self-attention mechanism used in the Transformer is unaware of positions, to avoid this shortage, position embeddings were used \cite{vaswani2017attention,DBLP:journals/corr/abs-1810-04805}. Instead of using the sinusoidal position embedding \cite{vaswani2017attention} and learned absolute position embedding, \citet{DBLP:conf/naacl/ShawUV18} argued that the distance between two tokens should be considered when calculating their attention score. \citet{DBLP:conf/iclr/HuangVUSHSDHDE19} reduced the computation complexity of relative positional encoding from $O(l^2d)$ to $O(ld)$, where $l$ is the length of sequences and $d$ is the hidden size. \citet{DBLP:conf/acl/DaiYYCLS19} derived a new form of relative positional encodings, so that the relative relation could be better considered.


\subsubsection{Transformer Encoder Architecture}

We first introduce the Transformer encoder proposed in~\cite{vaswani2017attention}. The Transformer encoder takes in an matrix $H \in \mathbb{R}^{l \times d}$, where $l$ is the sequence length, $d$ is the input dimension. Then three learnable matrix $W_q$, $W_k$, $W_v$ are used to project $H$ into different spaces. Usually, the matrix size of the three matrix are all $\mathbb{R}^{d \times d_k}$, where $d_k$ is a hyper-parameter. After that, the scaled dot-product attention can be calculated by the following equations,

\vspace{-1em}
{\small
  \begin{align}
     Q, K, V & = HW_q, HW_k, HW_v, \label{eq:qkv} \\
     A_{t,j} & = Q_tK_j^T, \label{eq:attn_transformer} \\
     \mathrm{Attn}(K, Q, V) & = \mathrm{softmax}(\frac{A}{\sqrt{d_k}})V, \label{eq:softmax_transformer}
  \end{align}
}%
where $Q_t$ is the query vector of the $t$th token, $j$ is the token the $t$th token attends. $K_j$ is the key vector representation of the $j$th token.
The softmax is along the last dimension. Instead of using one group of $W_q$, $W_k$, $W_v$, using several groups will enhance the ability of self-attention. When several groups are used, it is called multi-head self-attention, the calculation can be formulated as follows,

\vspace{-1em}
{\small
\begin{align}
  Q^{(h)}, K^{(h)}, V^{(h)} & = HW_{q}^{(h)}, HW_{k}^{(h)}, HW_{v}^{(h)}, \\
  head^{(h)} & = \mathrm{Attn}(Q^{(h)}, K^{(h)}, V^{(h)}), \\
  \mathrm{MultiHead}(H) & = [head^{(1)}; ...; head^{(n)}]W_O, \label{eq:multi-head}
\end{align}
}%
where $n$ is the number of heads, the superscript $h$ represents the head index. $[head^{(1)}; ...; head^{(n)}]$ means concatenation in the last dimension. Usually $d_k \times n = d$, which means the output of $[head^{(1)}; ...; head^{(n)}]$ will be of size $\mathbb{R}^{l \times d}$. $W_o$ is a learnable parameter, which is of size $\mathbb{R}^{d \times d}$.

The output of the multi-head attention will be further processed by the position-wise feed-forward networks, which can be represented as follows,

\vspace{-1em}
{\small
\begin{align}
  \mathrm{FFN}(x) = max(0, xW_1 + b_1)W_2 + b_2, \label{eq:ffn}
\end{align}
}%
where $W_1$, $W_2$, $b_1$, $b_2$ are learnable parameters, and $W_1 \in \mathbb{R}^{d \times d_{ff}}$, $W_2 \in \mathbb{R}^{d_{ff} \times d}$, $b_1 \in \mathbb{R}^{d_{ff}}$, $b_2 \in \mathbb{R}^{d}$. $d_{ff}$ is a hyper-parameter. Other components of the Transformer encoder includes layer normalization and Residual connection, we use them the same as \cite{vaswani2017attention}.

\subsubsection{Position Embedding} \label{sec:position_embedding}

The self-attention is not aware of the positions of different tokens, making it unable to capture the sequential characteristic of languages. In order to solve this problem, \citep{vaswani2017attention} suggested to use position embeddings generated by sinusoids of varying frequency. The $t$th token's position embedding can be represented by the following equations

\vspace{-1em}
{\small
  \begin{align}
    PE_{t,2i} & = \sin(t/10000^{2i/d}), \label{eq:position_embedding1} \\
    PE_{t,2i+1} & = \cos(t/10000^{2i/d}),  \label{eq:position_embedding2}
  \end{align}
}%
where $i$ is in the range of $[0, \frac{d}{2}]$, $d$ is the input dimension. This sinusoid based position embedding makes Transformer have an ability to model the position of a token and the distance of each two tokens. For any fixed offset $k$, $PE_{t+k}$ can be represented by a linear transformation of $PE_{t}$~\cite{vaswani2017attention}.

\section{Proposed Model}

In this paper, we utilize the Transformer encoder to model the long-range and complicated interactions of sentence for NER.
The structure of proposed model is shown in Fig~\ref{fig:whole_model_structure}. We detail each parts in the following sections.

\begin{figure}[t]
    \centering
    \includegraphics[width=0.45\textwidth]{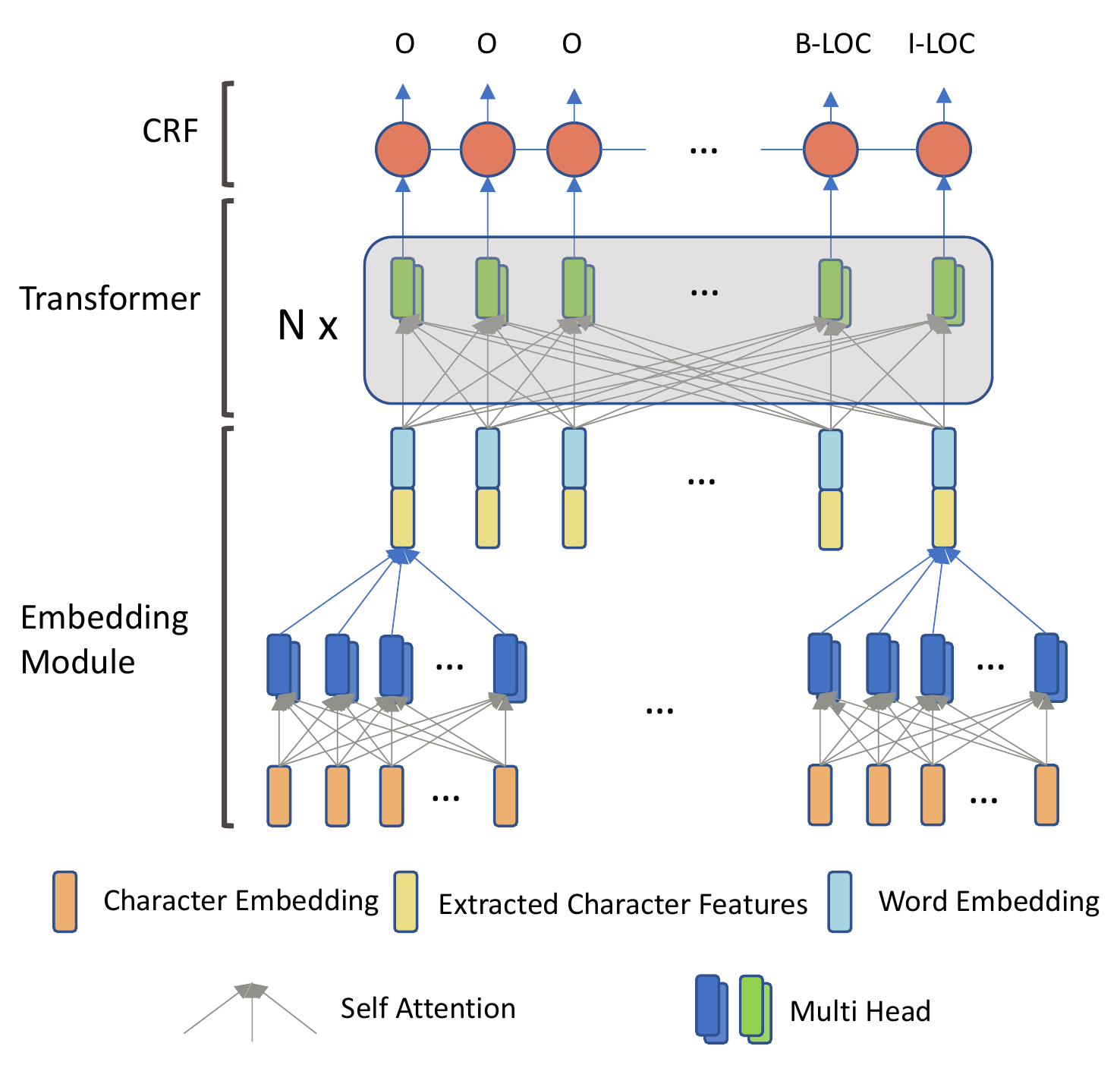}
    \caption{Model structure of TENER for English NER tasks. In TENER, Transformer encoder is used not only to extract the word-level contextual information, but also to encode character-level information in a word.} \label{fig:whole_model_structure}
\end{figure}

\subsection{Embedding Layer}


To alleviate the problems of data sparsity and out-of-vocabulary (OOV), most NER models adopted the CNN character encoder \cite{ma2016end,ye2018hybrid,chen2019grn} to represent words.
Compared to BiLSTM based character encoder~\citep{lample2016neural,ghaddar2018robust}, CNN is more efficient. Since Transformer can also fully exploit the GPU's parallelism, it is interesting to use Transformer as the character encoder. A potential benefit of Transformer-based character encoder is to extract different n-grams and even uncontinuous character patterns, like ``un..ily'' in ``unhappily'' and ``uneasily''. For the model's uniformity, we use the ``adapted Transformer'' to represent the Transformer introduced in next subsection.

The final word embedding is the concatenation of the character features extracted by the character encoder and the pre-trained word embeddings.

\subsection{Encoding Layer with Adapted Transformer}


Although Transformer encoder has potential advantage in modeling long-range context, it is not working well for NER task. In this paper, we propose an adapted Transformer for NER task with two improvements.

\subsubsection{Direction- and Distance-Aware Attention}

Inspired by the success of BiLSTM in NER tasks, we consider what properties the Transformer lacks compared to BiLSTM-based models. One observation is that BiLSTM can discriminatively collect the context information of a token from its left and right sides. But it is not easy for the Transformer to distinguish which side the context information comes from.

Although the dot product between two sinusoidal position embeddings is able to reflect their distance, it lacks directionality and this property will be broken by the vanilla Transformer attention. To illustrate this, we first prove two properties of the sinusoidal position embeddings.

\begin{prop}
  For an offset $k$ and a position $t$, $PE_{t+k}^TPE_{t}$ only depends on $k$, which means the dot product of two sinusoidal position embeddings can reflect the distance between two tokens.
\end{prop}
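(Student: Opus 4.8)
The plan is to expand the inner product coordinate-by-coordinate and collapse it with a single trigonometric identity. First I would write $PE_{t+k}^T PE_t = \sum_i \left( PE_{t+k,2i}\, PE_{t,2i} + PE_{t+k,2i+1}\, PE_{t,2i+1} \right)$, grouping the $d$ coordinates into the natural sine/cosine pairs indexed by $i$. Substituting the definitions in (\ref{eq:position_embedding1}) and (\ref{eq:position_embedding2}) and abbreviating the $i$-th frequency as $\omega_i = 1/10000^{2i/d}$, the contribution of the $i$-th pair becomes $\sin((t+k)\omega_i)\sin(t\omega_i) + \cos((t+k)\omega_i)\cos(t\omega_i)$.

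The key step is to recognize this as the cosine angle-difference identity $\cos A \cos B + \sin A \sin B = \cos(A-B)$ with $A = (t+k)\omega_i$ and $B = t\omega_i$, so that each pair collapses to $\cos(k\omega_i)$. Crucially, the absolute position $t$ cancels, and what survives depends only on the offset $k$ and the fixed frequency $\omega_i$. Summing over all pairs then gives $PE_{t+k}^T PE_t = \sum_i \cos(k/10000^{2i/d})$, which is manifestly a function of $k$ alone, establishing the claim.

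I do not expect a genuine obstacle, since the argument is a direct computation rather than a conceptual one. The only point requiring care is the index bookkeeping: one must check that the even coordinate $2i$ (a sine) and the odd coordinate $2i+1$ (a cosine) of $PE_{t+k}$ line up with the corresponding coordinates of $PE_t$, so that the identity applies pairwise. Once the coordinates are correctly paired, the cancellation of $t$ is immediate, and no further estimates or auxiliary lemmas are needed. I would also remark in passing that the resulting expression is an \emph{even} function of $k$, which foreshadows the second property (the lack of directionality) that the paper presumably proves next.
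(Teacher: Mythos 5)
Your proposal is correct and follows essentially the same route as the paper: pair the $\sin/\cos$ coordinates, apply the identity $\cos A\cos B+\sin A\sin B=\cos(A-B)$ with the frequencies $c_i=1/10000^{2i/d}$, and obtain $\sum_i \cos(c_i k)$, which depends only on $k$. Your closing remark that the result is an even function of $k$ matches how the paper derives Property 2 from Property 1.
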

\begin{proof}
  Based on the definitions of Eq.\eqref{eq:position_embedding1} and Eq.\eqref{eq:position_embedding2},
the position embedding of $t$-th token is
\begin{seequation}
  PE_t  = \left[ \begin{array}{c}
    \sin(c_0t)\\
    \cos(c_0t)\\
    \vdots \\
    \sin(c_{\frac{d}{2}-1}t)\\
    \cos(c_{\frac{d}{2}-1}t) \\
  \end{array} \right],
\end{seequation}
where $d$ is the dimension of the position embedding, $c_i$ is a constant decided by $i$, and its value is $1/10000^{2i/d}$.

Therefore,

 \vspace{-1em}
{\small
\begin{align}
 PE_t^TPE_{t+k} & = \sum_{\substack{j=0}}^{\frac{d}{2}-1} [\sin(c_jt)\sin(c_j(t+k))\nonumber \\
                & \qquad + \cos(c_jt)\cos(c_j(t+k))] \label{eq:cos_before}\\
               & = \sum_{\substack{j=0}}^{\frac{d}{2}-1} \cos(c_j(t-(t+k))) \label{eq:cos_after}\\
               & = \sum_{\substack{j=0}}^{\frac{d}{2}-1} \cos(c_jk),
\end{align}
}%
where Eq.\eqref{eq:cos_before} to Eq.\eqref{eq:cos_after} is based on the equation $\cos(x-y) = \sin(x)\sin(y) + \cos(x)\cos(y)$.
\end{proof}

\begin{figure}[t]
    \centering
    \includegraphics[width=0.45\textwidth]{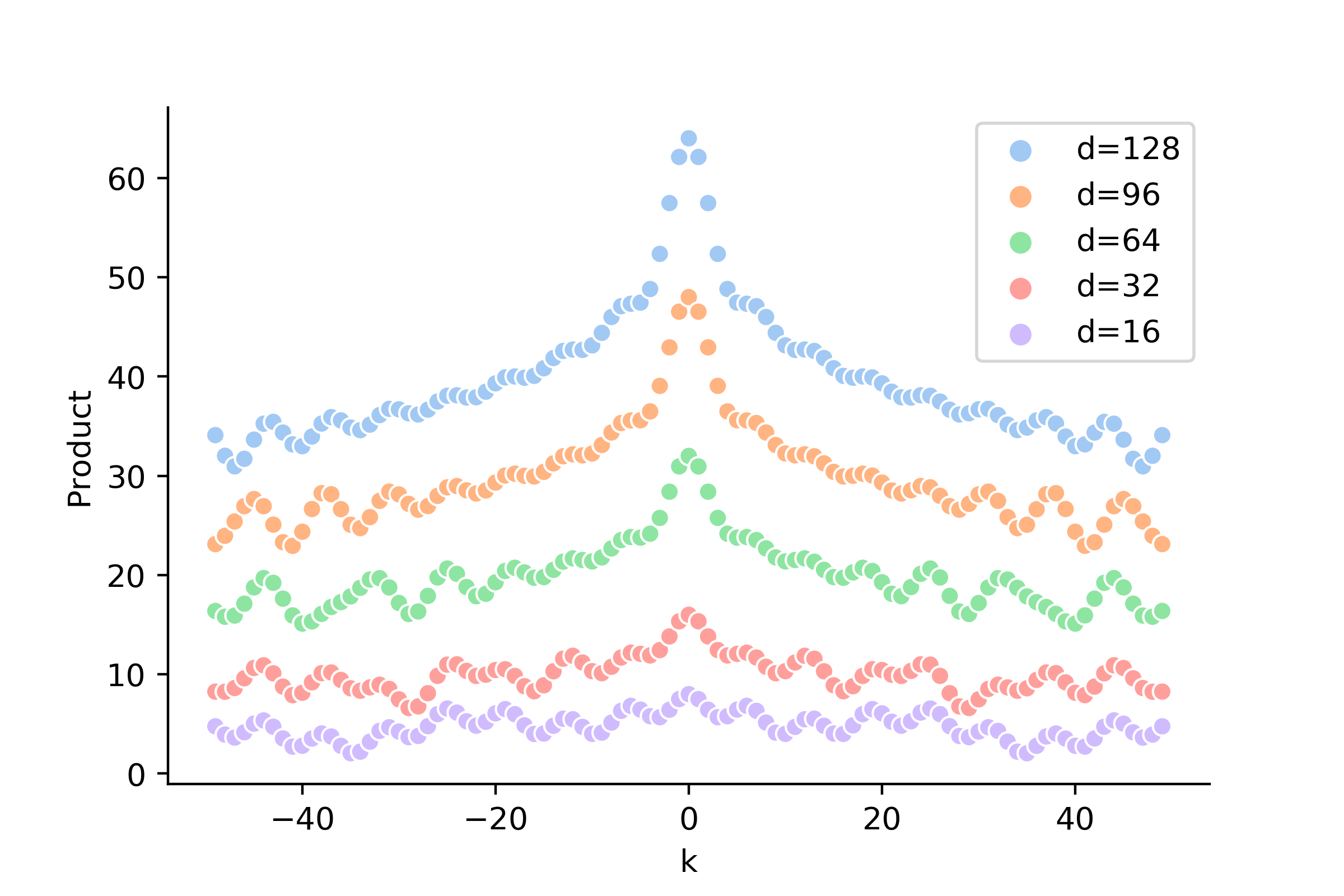}
    \caption{Dot product between two sinusoidal position embeddings whose distance is $k$. It is clear that the product is symmetrical, and with the increment of $|k|$, it has a trend to decrease, but this decrease is not monotonous. }\label{fig:position_curve}
\end{figure}

\begin{figure}[t]
    \centering
    \includegraphics[width=0.45\textwidth]{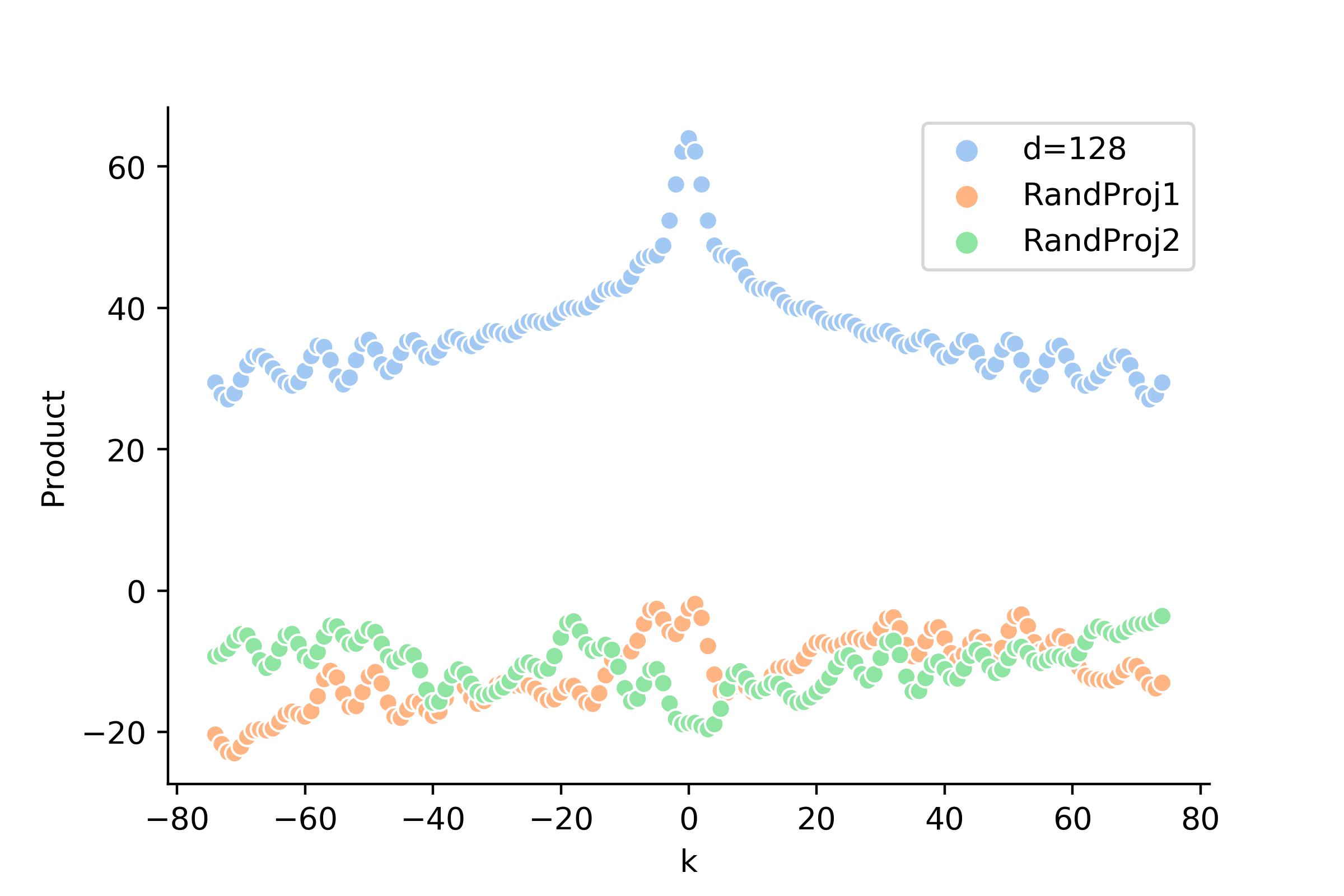}
    \caption{The upper line is the product between $PE_t^T PE_{t+k}$. The lower two lines are the products of $PE_t^T W PE_{t+k}$ with two random $W$s. Although $PE_t^TPE_{t+k}$ can reflect the distance, the $PE_t^TWPE_{t+k}$ has no clear pattern. }\label{fig:position_curve_2}
\end{figure}

\begin{prop}
For an offset $k$ and a position $t$, $PE_{t}^TPE_{t-k}=PE_{t}^TPE_{t+k}$, which means the sinusoidal position embeddings is unware of directionality.
\end{prop}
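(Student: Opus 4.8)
The plan is to obtain Property 2 as an immediate corollary of Property 1, since the only nontrivial step—rewriting the dot product of two position embeddings as a sum of cosines—has already been carried out. First I would invoke the closed form established in the preceding proof: for any position $t$ and any offset $m$, one has $PE_t^T PE_{t+m} = \sum_{j=0}^{\frac{d}{2}-1}\cos(c_j m)$, where $c_j = 1/10000^{2j/d}$. The entire content of Property 2 then collapses to comparing this expression at $m = k$ against $m = -k$.

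Concretely, I would place the two quantities side by side. Setting $m=k$ gives $PE_t^T PE_{t+k} = \sum_{j}\cos(c_j k)$, while setting $m=-k$ gives $PE_t^T PE_{t-k} = \sum_{j}\cos(-c_j k)$. The final step is to apply the evenness of cosine, $\cos(-x)=\cos(x)$, termwise to each summand; this identifies the two sums and yields the claimed equality $PE_t^T PE_{t-k} = PE_t^T PE_{t+k}$.

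I expect no genuine obstacle: the statement is a structural consequence of Property 1 together with a single elementary parity identity. The one point deserving a moment's care is checking that the substitution $m=-k$ in Property 1 is legitimate—that is, that the earlier derivation never implicitly assumed the offset to be nonnegative. A quick inspection confirms it does not, since the angle-subtraction identity $\cos(x-y)=\sin(x)\sin(y)+\cos(x)\cos(y)$ invoked there holds for arbitrary real arguments. As a robustness check one could alternatively use the symmetry $PE_t^T PE_{t-k} = PE_{t-k}^T PE_t$ and apply Property 1 at position $t-k$ with offset $k$, arriving at the same sum $\sum_j \cos(c_j k)$. I would close by interpreting the result: because the common value $\sum_j\cos(c_j k)$ depends only on the magnitude of $k$ and not its sign, the raw dot product cannot separate a token $k$ steps to the left from one $k$ steps to the right. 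This is exactly the loss of directionality the proposition asserts, and it is what motivates the direction-aware attention introduced afterward.
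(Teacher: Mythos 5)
Your proof is correct. Your primary route differs slightly from the paper's: you invoke the explicit closed form $PE_t^TPE_{t+m}=\sum_j\cos(c_jm)$ from the proof of Property 1 and then apply the evenness of cosine, $\cos(-c_jk)=\cos(c_jk)$, termwise, taking care to note that the earlier derivation never assumed the offset was nonnegative. The paper instead uses only the \emph{statement} of Property 1 --- that the dot product is independent of the position $t$ --- substituting $j=t-k$ to get $PE_t^TPE_{t+k}=PE_j^TPE_{j+k}=PE_{t-k}^TPE_t$, and then implicitly relying on the symmetry of the dot product. Your approach buys a more concrete, self-contained computation (and makes the ``depends only on $|k|$'' interpretation immediate), while the paper's is marginally more abstract, needing nothing about the specific sinusoidal form beyond translation invariance. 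The alternative argument you sketch as a robustness check is in fact exactly the paper's proof, so the two are reconciled within your own write-up.
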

\begin{proof}

Let $j=t-k$, according to property 1, we have

\vspace{-1em}
{\small
\begin{align}
 PE_{t}^TPE_{t+k} &= PE_{j}^TPE_{j+k}\\
 &=PE_{t-k}^TPE_{t}.
\end{align}
}%
\end{proof}

The relation between $d$, $k$ and $PE_t^TPE_{t+k}$ is displayed in Fig \ref{fig:position_curve}. The sinusoidal position embeddings are distance-aware but lacks directionality.

However, the property of distance-awareness also disappears when $PE_t$ is projected into the query and key space of self-attention. Since in vanilla Transformer the calculation between $PE_t$ and $PE_{t+k}$ is actually $PE_t^TW_q^TW_kPE_{t+k}$, where $W_q, W_k$ are parameters in Eq.\eqref{eq:qkv}. Mathematically, it can be viewed as $PE_t^TWPE_{t+k}$ with only one parameter $W$. The relation between $PE_t^TPE_{t+k}$ and $PE_t^TWPE_{t+k}$ is depicted in Fig \ref{fig:position_curve_2}.

Therefore, to improve the Transformer with direction- and distance-aware characteristic, we calculate the attention scores using the equations below:

\vspace{-1em}
{\small
\begin{align}
  & Q, K, V  =  HW_q, H_{d_k}, HW_v, \label{eq:project} \\
  & R_{t-j}\!= [\ldots \,  \sin(\frac{t-j}{10000^{2i/d_k}}) \,  \cos(\frac{t-j}{10000^{2i/d_k}}) \, \ldots]^T,  \label{eq:relative}\\
  & A^{rel}_{t,j} = Q_tK_j^T + Q_tR_{t-j}^T + \mathbf{u}K_j^T + \mathbf{v}R_{t-j}^T, \label{eq:attn_xl} \\
  & \mathrm{Attn}(Q, K, V) = \mathrm{softmax}(A^{rel})V, \label{eq:softmax_xl}
\end{align}}%
where $t$ is index of the target token, $j$ is the index of the context token, $Q_t, K_j$ is the query vector and key vector of token $t, j$ respectively, $W_q, W_v \in \mathbb{R}^{d \times d_k}$. To get $H_{d_k}\in \mathbb{R}^{l \times d_k}$, we first split $H$ into $d/d_k$ partitions in the second dimension, then for each head we use one partition. $\mathbf{u} \in \mathbb{R}^{d_k}$, $\mathbf{v} \in \mathbb{R}^{d_k}$ are learnable parameters,
$R_{t-j}$ is the relative positional encoding, and $R_{t-j} \in \mathbb{R}^{d_k}$, $i$ in Eq.\eqref{eq:relative} is in the range $[0, \frac{d_k}{2}]$. $Q_t^TK_j$ in Eq.\eqref{eq:attn_xl} is the attention score between two tokens; $Q_t^TR_{t-j}$ is the $t$th token's bias on certain relative distance; $u^TK_j$ is the bias on the $j$th token; $v^TR_{t-j}$ is the bias term for certain distance and direction.

Based on Eq.\eqref{eq:relative}, we have

\vspace{-1em}
{\small
\begin{align}
  R_{t}, R_{-t} = \left[ \begin{array}{c}
    \sin(c_0t)\\
    \cos(c_0t)\\
    \vdots \\
    \sin(c_{\frac{d}{2}-1}t)\\
    \cos(c_{\frac{d}{2}-1}t) \\
  \end{array} \right],
  \left[ \begin{array}{c}
    -\sin(c_0t)\\
    \cos(c_0t)\\
    \vdots \\
    -\sin(c_{\frac{d}{2}-1}t)\\
    \cos(c_{\frac{d}{2}-1}t) \\
  \end{array} \right],
\end{align}}%
because $\sin(-x)=-\sin(x), \cos(x)=\cos(-x)$. This means for an offset $t$, the forward and backward relative positional encoding are the same with respect to the $\cos(c_it)$ terms, but is the opposite with respect to the $\sin(c_it)$ terms. Therefore, by using $R_{t-j}$, the attention score can distinguish different directions and distances.

The above improvement is based on the work~\citep{DBLP:conf/naacl/ShawUV18,DBLP:conf/acl/DaiYYCLS19}.
Since the size of NER datasets is usually small, we avoid direct multiplication of two learnable parameters, because they can be represented by one learnable parameter. Therefore we do not use $W_k$ in Eq.\eqref{eq:project}. The multi-head version is the same as Eq.\eqref{eq:multi-head}, but we discard $W_o$ since it is directly multiplied by $W_1$ in Eq.\eqref{eq:ffn}.


\subsubsection{Un-scaled Dot-Product Attention}
The vanilla Transformer use the scaled dot-product attention to smooth the output of softmax function. In Eq.\eqref{eq:softmax_transformer}, the dot product of key and value matrices is divided by the scaling factor $\sqrt{d_k}$.

We empirically found that models perform better without the scaling factor $\sqrt{d_k}$. We presume this is because without the scaling factor the attention will be sharper. And the sharper attention might be beneficial in the NER task since only few words in the sentence are named entities.

\subsection{CRF Layer}
In order to take advantage of dependency between different tags, the Conditional Random Field (CRF) was used in all of our models. Given a sequence $\mathbf{s}=[s_1, s_2, ..., s_T]$, the corresponding golden label sequence is $\mathbf{y}=[y_1, y_2, ..., y_T]$, and $\mathbf{Y}(\mathbf{s})$ represents all valid label sequences. The probability of $\mathbf{y}$ is calculated by the following equation

\vspace{-1em}
{\small
\begin{align}
  P(\mathbf{y}|\mathbf{s}) = \frac{\sum_{t=1}^{T}e^{f(\mathbf{y}_{t-1},\mathbf{y}_t,\mathbf{s})}}
  {\sum_{\mathbf{y}^{\prime}}^{\mathbf{Y}(\mathbf{s})}\sum_{t=1}^{T}e^{f(\mathbf{y}_{t-1}^{\prime},\mathbf{y}_t^{\prime},\mathbf{s})}},
\end{align}}%
where $f(\mathbf{y}_{t-1},\mathbf{y}_t,\mathbf{s})$ computes the transition score from $\mathbf{y}_{t-1}$ to $\mathbf{y}_t$ and the score for $\mathbf{y}_t$. The optimization target is to maximize $P(\mathbf{y}|\mathbf{s})$. When decoding, the Viterbi Algorithm is used to find the path achieves the maximum probability.

\section{Experiment}

\subsection{Data}

We evaluate our model in two English NER datasets and four Chinese NER datasets.

\begin{table}[t]\small \setlength{\tabcolsep}{2pt}
  \centering
    \begin{threeparttable}
    \caption{Details of Datasets.} \label{tab:datasets_info}
    \begin{tabular}{llcrrr}
    \toprule
                             & Dataset                        & Type     & Train   & Dev    & Test   \\ \midrule
    \multirow{4}{*}{English} & \multirow{2}{*}{CoNLL2003}    & Sentence & 14.0k   & 3.2k   & 3.5k   \\
                             &                                & Token    & 203.6k  & 51.4k  & 46.4k  \\ \cline{2-6}
                             & \multirow{2}{*}{OntoNotes 5.0} & Sentence & 59.9k   & 8.5k   & 8.3k   \\
                             &                                & Token    & 1088.5k & 147.7k & 152.7k \\ \hline
    \multirow{8}{*}{Chinese} & \multirow{2}{*}{OntoNotes 4.0} & Sentence & 15.7k   & 4.3k   & 4.3k   \\
                             &                                & Token    & 491.9k  & 200.5k & 208.1k \\ \cline{2-6}
                             & \multirow{2}{*}{MSRA}          & Sentence & 46.4k   & 4.4k   & 4.4k   \\
                             &                                & Token    & 2169.9k & 172.6k & 172.6k \\ \cline{2-6}
                             & \multirow{2}{*}{Weibo}         & Sentence & 1.4k    & 0.3k   & 0.3k   \\
                             &                                & Token    & 73.5k   & 14.4k  & 14.8k  \\ \cline{2-6}
                             & \multirow{2}{*}{Resume}        & Sentence & 3.8k    & 0.5k   & 0.5k   \\
                             &                                & Token    & 124.1k  & 13.9k  & 15.1k  \\ \bottomrule
    \end{tabular}
\end{threeparttable}
\end{table}
\begin{table*}[t]\small 
  \centering
  \begin{tabular}{lllll}
  \toprule
  	 Models            & Weibo  & Resume    & OntoNotes4.0    & MSRA             \\ \hline
     BiLSTM $^{\clubsuit}$ & 56.75      & 94.41             & 71.81            & 91.87            \\
     ID-CNN $^{\spadesuit}$ & -                 & 93.75             & 62.25            & -                \\
     CAN-NER$^*$ \citep{DBLP:conf/naacl/ZhuW19} & 59.31             & 94.94             & 73.64            & 92.97            \\
     Transformer      & 46.38 $\pm$ 0.78  & 93.43 $\pm$ 0.26  & 66.49 $\pm$  0.30 & 88.35 $\pm$ 0.60 \\
     TENER(Ours)  & \textbf{58.17 $\pm$ 0.22}  & \textbf{95.00 $\pm$ 0.25}  & \textbf{72.43 $\pm$ 0.39} & \textbf{92.74 $\pm$ 0.27} \\
     \quad w/ scale    & 57.40 $\pm$ 0.3  & 94.00 $\pm$ 0.51 & 71.72 $\pm$ 0.08 & 91.67 $\pm$ 0.23\\
     \bottomrule
  \end{tabular}
\caption{The F1 scores on Chinese NER datasets. $^{\clubsuit}$,$^{\spadesuit}$ are results reported in \citep{DBLP:conf/acl/ZhangY18} and \citep{DBLP:conf/ijcai/GuiM0ZJH19}, respectively. ``w/ scale" means TENER using the scaled attention in Eq.\eqref{eq:softmax_xl}.
$^*$ their results are not directly comparable with ours, since they used 100d pre-trained character and bigram embeddings. Other models use the same embeddings.
} \label{tab:cn_ner}
\end{table*}

(1) CoNLL2003 is one of the most evaluated English NER datasets, which contains four different named entities: PERSON, LOCATION, ORGANIZATION, and MISC \cite{DBLP:conf/conll/SangM03}.

(2) OntoNotes 5.0 is an English NER dataset whose corpus comes from different domains, such as telephone conversation, newswire. We exclude the New Testaments portion since there is no named entity in it \cite{chen2019grn,DBLP:journals/tacl/ChiuN16}. This dataset has eleven entity names and seven value types, like CARDINAL, MONEY, LOC.

(3) \citet{Weischedel2011ontonotes} released OntoNotes 4.0. In this paper, we use the Chinese part. We adopted the same pre-process as \cite{DBLP:conf/naacl/CheWML13}.

(4) The corpus of the Chinese NER dataset MSRA came from news domain \cite{DBLP:conf/acl-sighan/Levow06}.

(5) Weibo NER was built based on text in Chinese social media Sina Weibo \cite{DBLP:conf/emnlp/PengD15}, and it contained 4 kinds of entities.

(6) Resume NER was annotated by \cite{DBLP:conf/acl/ZhangY18}.

Their statistics are listed in Table \ref{tab:datasets_info}. For all datasets, we replace all digits with ``0'', and use the BIOES tag schema. For English, we use the Glove 100d pre-trained embedding~\cite{pennington2014glove}. For the character encoder, we use 30d randomly initialized character embeddings. More details on models' hyper-parameters can be found in the supplementary material. For Chinese, we used the character embedding and bigram embedding released by \cite{DBLP:conf/acl/ZhangY18}. All pre-trained embeddings are finetuned during training. In order to reduce the impact of randomness, we ran all of our experiments at least three times, and its average F1 score and standard deviation are reported.

We used random-search to find the optimal hyper-parameters, hyper-parameters and their ranges are displayed in the supplemental material. We use SGD and 0.9 momentum to optimize the model. We run 100 epochs and each batch has 16 samples. During the optimization, we use the triangle learning rate~\cite{DBLP:conf/wacv/Smith17} where the learning rate rises to the pre-set learning rate at the first 1\% steps and decreases to 0 in the left 99\% steps. The model achieves the highest development performance was used to evaluate the test set. The hyper-parameter search range and other settings can be found in the supplementary material. Codes are available at \url{https://github.com/fastnlp/TENER}.

\subsection{Results on Chinese NER Datasets}

We first present our results in the four Chinese NER datasets. Since Chinese NER is directly based on the characters, it is more straightforward to show the abilities of different models without considering the influence of word representation.

As shown in Table \ref{tab:cn_ner}, the vanilla Transformer does not perform well and is worse than the BiLSTM and CNN based models. However, when relative positional encoding combined, the performance was enhanced greatly, resulting in better results than the BiLSTM and CNN in all datasets. The number of training examples of the Weibo dataset is tiny, therefore the performance of the Transformer is abysmal, which is as expected since the Transformer is data-hungry. Nevertheless, when enhanced with the relative positional encoding and unscaled attention, it can achieve even better performance than the BiLSTM-based model. The superior performance of the adapted Transformer in four datasets ranging from small datasets to big datasets depicts that the adapted Transformer is more robust to the number of training examples than the vanilla Transformer. As the last line of Table \ref{tab:cn_ner} depicts, the scaled attention will deteriorate the performance.

\subsection{Results on English NER datasets}

\begin{table}[t]\small \setlength{\tabcolsep}{3pt}
  \centering
  \begin{threeparttable}
  \begin{tabular}{p{12em}ll}
  \toprule
  Models                        & CoNLL2003        & OntoNotes 5.0    \\ \midrule
  BiLSTM-CRF \citep{huang2015bidirectional} & 88.83            &                  \\
  CNN-BiLSTM-CRF \citep{DBLP:journals/tacl/ChiuN16}          & 90.91 $\pm$ 0.20 & 86.12 $\pm$ 0.22 \\
  BiLSTM-BiLSTM-CRF \citep{lample2016neural}       & 90.94            &                  \\
  CNN-BiLSTM-CRF \citep{ma2016end}             & 91.21            &                  \\
  ID-CNN \citep{strubell2017fast}       & 90.54 $\pm$ 0.18 & 86.84 $\pm$ 0.19 \\
  LM-LSTM-CRF \citep{DBLP:conf/aaai/LiuSRXG0018}         & 91.24 $\pm$ 0.12 &                  \\
  CRF+HSCRF \citep{ye2018hybrid}           & 91.26 $\pm$ 0.1  &                  \\
  BiLSTM-BiLSTM-CRF \citep{adnan2018}              & 91.11            &             \\
  LS+BiLSTM-CRF \citep{ghaddar2018robust}      & 90.52 $\pm$ 0.20 & 86.57 $\pm$ 0.1  \\
  CN$^3$ \citep{liu2019contextualized}  & 91.1             &                  \\
  GRN \citep{chen2019grn}           & 91.44 $\pm$ 0.16 & 87.67 $\pm$ 0.17 \\
  Transformer                   & 89.57 $\pm$ 0.12 &    86.73 $\pm$ 0.07  \\
  TENER (Ours)               & 91.33 $\pm$ 0.05 & \textbf{88.43 $\pm$ 0.12} \\
  \quad w/ scale                        & 91.06 $\pm$ 0.09     & 87.94 $\pm$ 0.1 \\
  \quad w/ CNN-char              & \textbf{91.45 $\pm$ 0.07} & 88.25 $\pm$ 0.11 \\
  \bottomrule
  \end{tabular}
  \end{threeparttable}
  \caption{The F1 scores on English NER datasets. We only list results based on non-contextualized embeddings, and methods utilized pre-trained language models, pre-trained features, or higher dimension word vectors are excluded. TENER (Ours) uses the Transformer encoder both in the character-level and word-level. ``w/ scale" means TENER using the scaled attention in Eq.\eqref{eq:softmax_xl}.  ``w/ CNN-char'' means TENER using CNN as character encoder instead of AdaTrans.
  } \label{tab:en_ner}
\end{table}

\begin{table}[h]\small \setlength{\tabcolsep}{3pt}
  \begin{threeparttable}
  \begin{tabular}{p{12em}ll}
    \toprule
      Models                & CoNLL2003      & OntoNotes 5.0    \\ \midrule
      BiLSTM                & 92.55$\pm$0.10 & 88.88$\pm$0.16  \\
      GRN \citep{chen2019grn} & 92.34$\pm$0.1  & -               \\
      TENER (Ours)                & \textbf{92.62$\pm$0.09} & \textbf{89.78$\pm$0.15} \\
    \bottomrule
  \end{tabular}
  \end{threeparttable}
  \caption{Performance of models with ELMo as their embeddings in English NER datasets. ``BiLSTM" is our run. In the larger OntoNotes5.0, TENER achieves much better F1 score.} \label{tab:en_ner_elmo}
\end{table}

\begin{table}[ht ]\small \setlength{\tabcolsep}{3pt}
  \centering
\subfloat[CoNLL2003]{\label{exp3:conll}
\begin{tabular}{llll}
\toprule
 \diagbox[innerwidth = 2em, width = 5em, height = 4ex]{Char}{Word}               & BiLSTM & ID-CNN & AdaTrans\\ \midrule
No Char         & 88.34 $\pm$ 0.32  & 87.30  $\pm$ 0.15  &  88.37 $\pm$ 0.27    \\
BiLSTM          & 91.32 $\pm$ 0.13  & 89.99 $\pm$ 0.14  &  91.29 $\pm$ 0.12     \\
CNN             & 91.22 $\pm$ 0.10  & 90.17 $\pm$ 0.02  &  \textbf{91.45 $\pm$ 0.07}\\
Transformer     & 91.12 $\pm$ 0.10  & 90.05 $\pm$ 0.13  &  91.23 $\pm$ 0.06     \\
AdaTrans & 91.38 $\pm$ 0.15  & 89.99 $\pm$ 0.05  &  91.33 $\pm$ 0.05 \\\bottomrule
\end{tabular}
}
\\

\subfloat[OntoNotes 5.0]{\label{exp3:ontonotes}
\begin{tabular}{llll}
\toprule
 \diagbox[innerwidth = 2em, width = 5em, height = 4ex]{Char}{Word}                 & BiLSTM & ID-CNN & AdaTrans \\ \midrule
No Char         & 85.20 $\pm$ 0.23  & 84.26 $\pm$ 0.07  &  85.80 $\pm$ 0.10  \\
BiLSTM          & 87.85 $\pm$ 0.09  & 87.38 $\pm$ 0.17  &  88.12 $\pm$ 0.16    \\
CNN             & 87.79 $\pm$ 0.14  & 87.10 $\pm$ 0.06  &  88.25 $\pm$ 0.11   \\
Transformer     & 88.01 $\pm$ 0.06  & 87.31 $\pm$ 0.10  &  88.20 $\pm$ 0.07     \\
AdaTrans & 88.12 $\pm$ 0.17  & 87.51 $\pm$ 0.11  &  \textbf{88.43 $\pm$ 0.12}    \\ \bottomrule
\end{tabular}
}
\caption{F1 scores in the CoNLL2003 and OntoNotes 5.0. ``Char'' means character-level encoder, and ``Word'' means word-level encoder. ``AdaTrans'' means our adapted Transformer encoder.}
\end{table}

The comparison between different NER models on English NER datasets is shown in Table \ref{tab:en_ner}. The poor performance of the Transformer in the NER datasets was also reported by \cite{DBLP:conf/naacl/GuoQLSXZ19}. Although performance of the Transformer is higher than \cite{DBLP:conf/naacl/GuoQLSXZ19}, it still lags behind the BiLSTM-based models~\cite{ma2016end}. Nonetheless, the performance is massively enhanced by incorporating the relative positional encoding and unscaled attention into the Transformer. The adaptation not only makes the Transformer achieve superior performance than BiLSTM based models, but also unveil the new state-of-the-art performance in two NER datasets when only the Glove 100d embedding and CNN character embedding are used. The same deterioration of performance was observed when using the scaled attention. Besides, if ELMo was used \cite{peters2018deep}, the performance of TENER can be further boosted as depicted in Table \ref{tab:en_ner_elmo}.


\subsection{Analysis of Different Character Encoders}

The character-level encoder has been widely used in the English NER task to alleviate the data sparsity and OOV problem in word representation. In this section, we cross different character-level encoders (BiLSTM, CNN, Transformer encoder and our adapted Transformer encoder (AdaTrans for short) ) and different word-level encoders (BiLSTM, ID-CNN and AdaTrans) to implement the NER task. Results on CoNLL2003 and OntoNotes 5.0 are presented in Table \ref{exp3:conll} and Table \ref{exp3:ontonotes}, respectively.

The ID-CNN encoder is from~\cite{strubell2017fast}, and we re-implement their model in PyTorch. For different combinations, we use random search to find its best hyper-parameters. Hyper-parameters for character encoders were fixed. The details can be found in the supplementary material.

For the results on CoNLL2003 dataset which is depicted in Table \ref{exp3:conll}, the AdaTrans performs as good as the BiLSTM in different character encoder scenario averagely. In addition, from Table \ref{exp3:ontonotes}, we can find the pattern that the AdaTrans character encoder outpaces the BiLSTM and CNN character encoders when different word-level encoders being used. Moreover, no matter what character encoder being used or none being used, the AdaTrans word-level encoder gets the best performance. This implies that when the number of training examples increases, the AdaTrans character-level and word-level encoder can better realize their ability.

\subsection{Convergent Speed Comparison}

\begin{figure}[ht]
    \centering
    \includegraphics[width=0.45\textwidth]{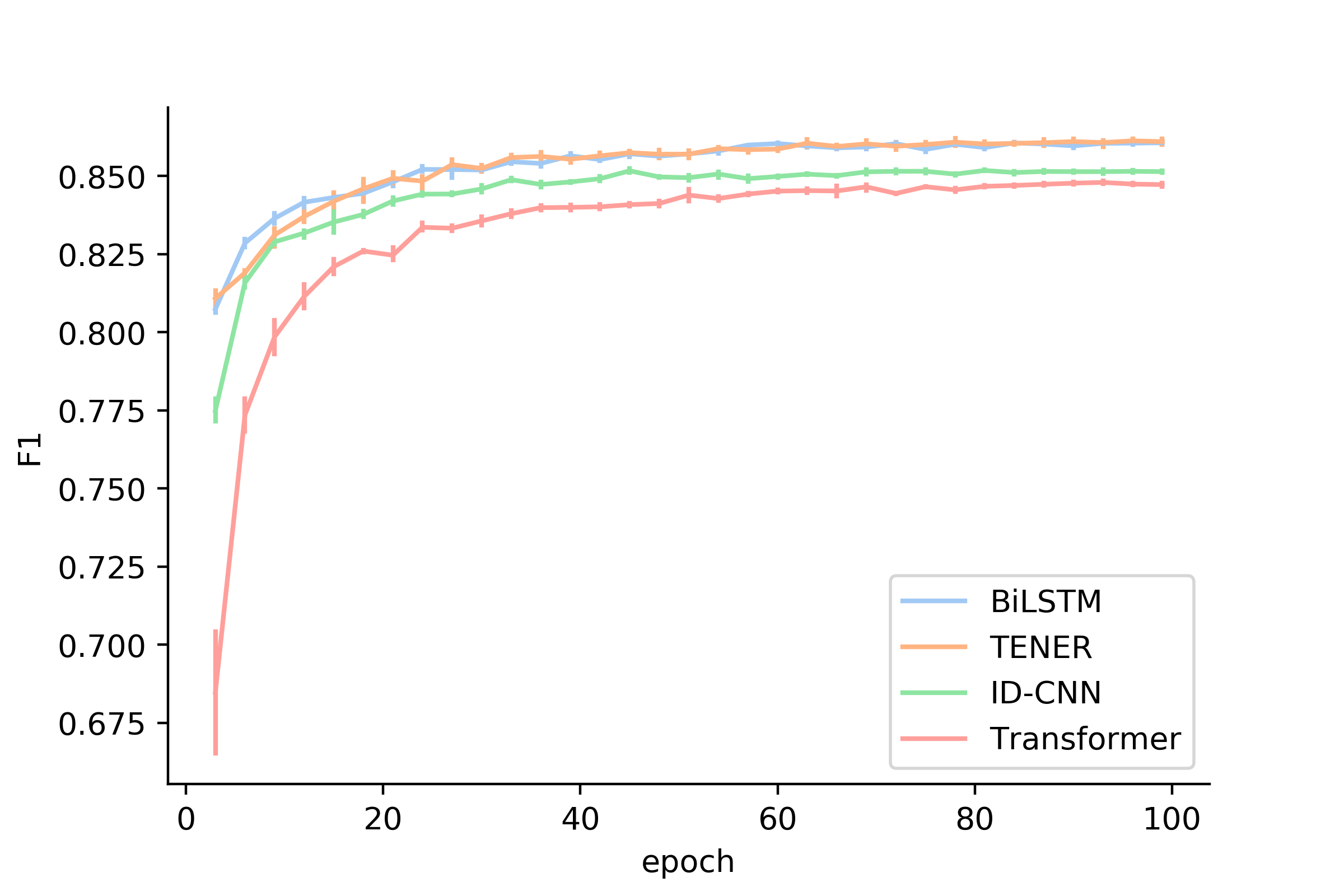}
    \caption{Convergent speed in the development dataset of OntoNotes 5.0 for four kinds of models.} \label{fig:converge_speed}
\end{figure}

We compare the convergent speed of BiLSTM, ID-CNN, Transformer, and TENER in the development set of the OntoNotes 5.0. The curves are shown in Fig \ref{fig:converge_speed}. TENER converges as fast as the BiLSTM model and outperforms the vanilla Transformer.

\section{Conclusion}
In this paper, we propose TENER, a model adopting Transformer Encoder with specific customizations for the NER task. Transformer Encoder has a powerful ability to capture the long-range context. In order to make the Transformer more suitable to the NER task, we introduce the direction-aware, distance-aware and un-scaled attention. Experiments in two English NER tasks and four Chinese NER tasks show that the performance can be massively increased. Under the same pre-trained embeddings and external knowledge, our proposed modification outperforms previous models in the six datasets. Meanwhile, we also found the adapted Transformer is suitable for being used as the English character encoder, because it has the potentiality to extract intricate patterns from characters. Experiments in two English NER datasets show that the adapted Transformer character encoder performs better than BiLSTM and CNN character encoders.

%

\bibliography{nlp}
\bibliographystyle{acl_natbib}

\section{Supplemental Material}

\subsection{Character Encoder} \label{supply:char_embed}

We exploit four kinds of character encoders. For all character encoders, the randomly initialized character embeddings are 30d. The hidden size of BiLSTM used in the character encoder is 50d in each direction. The kernel size of CNN used in the character encoder is 3, and we used 30 kernels with stride 1. For Transformer and adapted Transformer, the number of heads is 3, and every head is 10d, the dropout rate is 0.15, the feed-forward dimension is 60. The Transformer used the sinusoid position embedding. The number of parameters for the character encoder (excluding character embedding) when using BiLSTM, CNN, Transformer and adapted Transformer are 35830, 3660, 8460 and 6600 respectively. For all experiments, the hyper-parameters of character encoders stay unchanged.

\subsection{Hyper-parameters} \label{supply:random_search}

The hyper-parameters and search ranges for different encoders are presented in Table \ref{tb:bilstm}, Table \ref{tb:dilate-cnn} and Table \ref{tb:rel-transformer}.

\begin{table}[t] \small
  \begin{tabular}{lc}\toprule
                   & English                    \\ \midrule
  number of layers & [1, 2]                     \\
  hidden size      & [200, 400, 600, 800, 1200] \\
  learning rate    & [0.01, 0.007, 0.005]       \\
  fc dropout       & 0.4 \\ \bottomrule
  \end{tabular}
  \caption{The hyper-parameters and hyper-parameter search ranges for BiLSTM.} \label{tb:bilstm}
\end{table}

\begin{table}[t] \small
  \begin{tabular}{lc}
  \toprule
                    & English                        \\
  \midrule
  number of layers  & [2, 3, 4, 5, 6]                \\
  number of kernels & [200, 400, 600, 800]           \\
  kernel size       & 3                              \\
  learning rate     & [2e-3, 1.5e-3, 1e-3, 7e-4] \\
  fc dropout        & 0.4  \\ \bottomrule
  \end{tabular}
  \caption{The hyper-parameters and hyper-parameter search ranges for ID-CNN.} \label{tb:dilate-cnn}
\end{table}

\begin{table}[]\small \setlength{\tabcolsep}{3pt}
  \centering
  \begin{tabular}{lcc} \toprule
                      & Chinese                 & English                  \\ \midrule
  number of layers    & [1, 2]                  & [1, 2]                   \\
  number of head      & [4, 6, 8, 10]           & [8, 10, 12, 14]          \\
  head dimension      & [32, 48, 64, 80, 96]    & [64, 80, 96, 112, 128]   \\
  learning rate       & [1e-3, 5e-4, 7e-4] & [9e-4, 7e-4, 5e-4] \\
  transformer dropout & 0.15                    & 0.15                     \\
  fc dropout          & 0.4                     & 0.4 \\ \bottomrule
  \end{tabular}
  \caption{The hyper-parameters and hyper-parameter search ranges for Transformer and adapted Transformer in Chinese and English NER datasets.} \label{tb:rel-transformer}
\end{table}

\end{document}